\newcommand{\ospace}{\mathcal{X}}
\newcommand{\obs}{x}
\newcommand{\espace}{\mathcal{D}}
\newcommand{\eprims}{\mathcal{D}_0}
\newcommand{\oracle}{D}
\newcommand{\expr}{d}
\newcommand{\ecomp}[2]{\langle#1, #2\rangle}
\newcommand{\edist}{\Delta}
\newcommand{\einterp}{\hat{f}}
\newcommand{\eparam}{\eta}
\newcommand{\rspace}{\Theta}
\newcommand{\repr}{\theta}
\newcommand{\rcomp}[2]{#1 * #2}
\newcommand{\rdist}{\delta}
\newcommand{\fn}{f}
\newcommand{\indicate}{\mathbb{I}}
\newcommand{\err}{\textsc{tre}\xspace}
\newcommand{\eps}{\epsilon}
\newcommand{\cnn}{\mathtt{CNN}}
\DeclareMathOperator*{\argmin}{arg\,min}
\newtheorem{defn}{Definition}
\newtheorem{prop}{Proposition}
\newtheorem{lemma}{Lemma}
\newtheorem{remark}{Remark}
\newsavebox{\bx}
\newenvironment{mybox}{
  \setlength{\fboxsep}{4mm}
  \begin{lrbox}{\bx}
    \begin{minipage}{0.925\textwidth}
}{
    \end{minipage}
  \end{lrbox}
  \vspace{2mm}
  \fbox{
    \usebox{\bx}
  }
  \vspace{2mm}
}
\newcommand{\myboxtitle}[1]{\textbf{#1}\\\par}
\newcommand{\eg}{e.g.\ }
\title{Measuring Compositionality in \\ Representation Learning}
\author{
\strut \hspace{-1.5mm} Jacob Andreas \\
Computer Science Division \\
University of California, Berkeley \\
\texttt{jda@cs.berkeley.edu} \\
}
\begin{document}

\maketitle

\begin{abstract}
  Many machine learning algorithms represent input data with vector embeddings
  or discrete codes.  When inputs exhibit compositional structure (\eg objects
  built from parts or procedures from subroutines), it is natural to ask whether
  this compositional structure is reflected in the the inputs' learned
  representations. While the assessment of compositionality in languages has
  received significant attention in linguistics and adjacent fields, the machine
  learning literature lacks general-purpose tools for producing graded
  measurements of compositional structure in more general (\eg vector-valued)
  representation spaces. We describe a procedure for evaluating compositionality
  by measuring how well the true representation-producing model can be
  approximated by a model that explicitly composes a collection of inferred
  representational primitives. We use the procedure to provide formal and
  empirical characterizations of compositional structure in a variety of
  settings, exploring the relationship between compositionality and learning
  dynamics, human judgments, representational similarity, and generalization.
\end{abstract}

\section{Introduction}

\begin{wrapfigure}{r}{0.5\textwidth}
  \vspace{-1.2em}
  \centering
  \strut
    \includegraphics[width=0.47\textwidth,clip,trim=1cm 14cm 11.9cm .5cm]{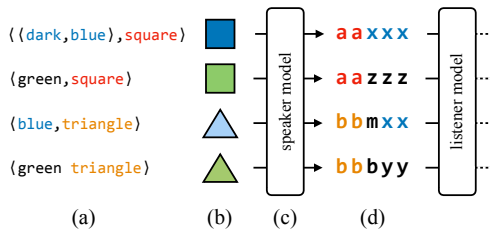}
  \vspace{-.4em}
	\caption{Representations arising from a communication game. In this game, an
  observation (b) is presented to a learned speaker model (c), which
  encodes it as a discrete character sequence (d) to be consumed by a listener
  model for some downstream task.
  The space of inputs has known compositional structure (a).
  We want to measure the extent to which this structure is reflected
  (perhaps imperfectly) in the structure of the learned codes.
  }
  \label{fig:teaser}
\end{wrapfigure}

The success of modern representation learning techniques
has been accompanied by an interest in understanding 
the structure of learned representations. One feature shared by many
\emph{human}-designed representation systems is compositionality: the capacity
to represent complex concepts (from objects to procedures to beliefs) by
combining simple parts \citep{Fodor02Compositionality}. While many machine
learning approaches make use of human-designed compositional analyses for
representation and prediction \citep{Socher13CVG,Dong16Seq2SeqSemparse}, it is
also natural to ask whether (and how) compositionality arises in learning
problems where compositional structure has not been built in from the start.
Consider the example in \autoref{fig:teaser}, which shows a hypothetical
character-based encoding scheme learned for a simple communication task (similar
to the one studied by \citeauthor{Lazaridou16Communication},
\citeyear{Lazaridou16Communication}).  Is this encoding scheme compositional?
That is, to what extent can we analyze the agents' messages as being built from
smaller pieces (\eg pieces \texttt{xx} meaning \emph{blue} and \texttt{bb}
meaning \emph{triangle})?

A large body of work, from early experiments on language evolution to recent
deep learning models \citep{Kirby98Learning,Lazaridou16LanguageGame}, aims to
answer questions like this one. But existing solutions rely on manual (and often
subjective) analysis of model outputs \citep{Mordatch17Emergence}, or at best
automated procedures tailored to the specifics of individual problem domains
\citep{Brighton06Topo}.  They are difficult to compare and difficult to apply
systematically.

We are left with a need for a standard, formal, automatable and quantitative
technique for evaluating claims about compositional structure in learned
representations. The present work aims at first steps toward meeting that need.
We focus on an \emph{oracle} setting where the compositional structure of model inputs is
known, and where the only question is whether this structure is reflected in
model outputs. This oracle evaluation paradigm covers most of the existing
representation learning problems in which compositionality has been studied.  

The first contribution of this paper is a simple formal framework for measuring
how well a collection of representations (discrete- or continuous-valued)
reflects an oracle compositional analysis of model inputs. We propose an
evaluation metric called \err, which provides graded judgments of
compositionality for a given set of (input, representation) pairs.  The core of
our proposal is to treat a set of primitive meaning representations as
hidden, and optimize over them to find an explicitly compositional model that
approximates the true model as well as possible. 
For example, if the compositional structure that describes an object is a simple
conjunction of attributes, we can search for a collection of ``attribute
vectors'' that sum together to produce the observed object representations; if
it is a sparse combination of (attribute, value) pairs we can additionally
search for ``value vectors'' and parameters of a binding operation; and so on
for more complex compositions.

Having developed a tool for assessing the compositionality of representations,
the second contribution of this paper is a survey of applications. We
present experiments and analyses aimed at answering four questions about the
relationship between compositionality and learning:
\begin{itemize}
  \item How does compositionality of representations evolve in relation to other
    measurable model properties over the course of the learning process?
    (\autoref{sec:bottleneck})
  \item How well does compositionality of representations track human judgments
    about the compositionality of model inputs? (\autoref{sec:humans})
  \item How does compositionality constrain distances between representations,
    and
    how does \err relate to other methods that analyze representations based
    on similarity? (\autoref{sec:similarity})
  \item Are compositional representations necessary for generalization to
    out-of-distribution inputs? (\autoref{sec:generalization})
\end{itemize}
We conclude with a discussion of possible applications and generalizations
of \err-based analysis.

\section{Related Work}

Arguments about whether distributed (and other non-symbolic) representations
could model compositional phenomena were a staple of 1980s-era
connectionist--classicist debates. \citet{Smolensky98Constituency} provides an
overview of this discussion and its relation to learnability, as well as a
concrete implementation of a compositional encoding scheme with distributed
representations. Since then, numerous other approaches for compositional
representation learning have been proposed, with
\citep{Mitchell08Composition,Irsoy2014DeepTreeRnn} and without
\citep{Dircks99LanguageChange,Havrylov17Comm} the scaffolding of explicit
composition operations built into the model.

The main experimental question is thus when and how compositionality arises
``from scratch'' in the latter class of models. In order to answer this
question it is first necessary to determine whether compositional structure is
present at all. Most existing proposals come from linguistics and and
philosophy, and offer evaluations of compositionality targeted
at analysis of formal and natural languages
\citep{Carnap37Logical,Lewis76GeneralSemantics}. Techniques from this literature
are specialized to the details of linguistic representations---particularly the
algebraic structure of grammars \citep{Montague70UG}. It is not straightforward
to apply these techniques in more general settings, particularly those featuring
non-string-valued representation spaces. We are not aware of existing work that
describes a procedure suitable for answering questions about compositionality in
the general case.

Machine learning research has responded to this absence in several ways.
One class of evaluations \citep{Mordatch17Emergence,Choi18Obverter} derives
judgments from ad-hoc manual analyses of representation spaces.  These analyses
provide insight into the organization of representations but
are time-consuming and non-reproducible. Another class of evaluations
\citep{Brighton02Transmission,Andreas17RNNSyn,Bogin18ConsistentSpeakers}
exploits task-specific structure (\eg the ability to elicit pairs of
representations known to feature particular relationships) to give
evidence of compositionality.  Our work aims to provide a standard and
scalable alternative to these model- and task-specific evaluations.

Other authors refrain from measuring compositionality directly, and instead base
analysis on measurement of related phenomena, for which more standardized
evaluations exist. Examples include correlation between representation
similarity and similarity of oracle compositional analyses
\citep{Brighton06Topo} and generalization to structurally novel inputs
\citep{Kottur17Takedown}. Our approach makes it possible to examine the
circumstances under which these surrogate measures in fact track stricter
notions of compositionality; 
%similarity is discussed in \autoref{sec:similarity}
%and generalization in \autoref{sec:generalization}.
similarity is discussed in Sec.\ \ref{sec:similarity}
and generalization in Sec.\ \ref{sec:generalization}.

A long line of work in natural language processing
\citep{Coecke10Tensor,Baroni10Matrices,Clark12VSM,Fyshe15Compositional} focuses
on learning composition functions to produce distributed representations of
phrases and sentences---that is, for purposes of modeling rather than
evaluation. We use one experiment from this literature to validate our own
approach (\autoref{sec:humans}).  On the whole, we view work on compositional
representation learning in NLP as complementary to the framework presented here:
our approach is agnostic to the particular choice of composition function, and
the aforementioned references provide well-motivated choices suitable for
evaluating data from language and other sources.
Indeed, one view of the present work is simply as a demonstration that we can
take existing NLP techniques for compositional representation learning, fit them
to representations produced by other models (even in non-linguistic settings),
and view the resulting \emph{training loss} as a measure of the compositionality of the
representation system in question.

\section{Evaluating Compositionality}
\label{sec:background}
\label{sec:tre}

Consider again the communication task depicted in \autoref{fig:teaser}. Here, a
speaker model observes a target object described by a feature vector. The
speaker sends a message to a listener model, which uses the message to complete
a downstream task---for example, identifying the referent from a collection of
distractors based on the content of the message
\citep{Enquist94Symmetry, Lazaridou16LanguageGame}.  Messages produced by the
speaker model serve as \emph{representations} of input objects; we want to know
if these representations are compositional. Crucially, we may already know
something about the structure of the inputs themselves. In this example, inputs
can be identified via composition of categorical shape and color attributes. How
might we determine whether this \emph{oracle} analysis of input structure is
reflected in the structure of representations? This section proposes an
automated procedure for answering the question.

\paragraph{Representations} A representation learning problem is defined by
    a dataset $\ospace$ of \emph{observations} $\obs$
    (\autoref{fig:teaser}b);
    a space $\rspace$ of \emph{representations} $\repr$
    (\autoref{fig:teaser}d);
    and
    a \emph{model} $\fn : \ospace \to \rspace$
    (\autoref{fig:teaser}c)%
.
We assume that the representations produced by $\fn$ are used in a larger
system to accomplish some concrete task, the details of which are not important
for our analysis.

\paragraph{Derivations} The technique we propose additionally assumes we
have prior knowledge about the compositional structure of inputs. In particular,
we assume that inputs can be labeled with tree-structured
\emph{derivations} $\expr$ (\autoref{fig:teaser}a), defined by
    a finite set $\eprims$ of \emph{primitives}
    and
    a binary \emph{bracketing} operation $\ecomp{\cdot}{\cdot}$, such
    that if
    $\expr_i$ and $\expr_j$ are derivations, $\ecomp{\expr_i}{\expr_j}$ is a
    derivation%
.
Derivations are produced by
    a \emph{derivation oracle} $\oracle : \ospace \to \espace$.

\paragraph{Compositionality}
In intuitive terms, the representations computed by $\fn$ are compositional
if each $\fn(x)$ is determined by the structure of $\oracle(x)$.
Most discussions of compositionality, following \citet{Montague70UG}, make this
precise by defining
    a \emph{composition} operation $\rcomp{\repr_a}{\repr_b} \mapsto \repr$ in
    the space of representations.
Then the model $\fn$ is compositional if it is a homomorphism from
inputs to representations: we require that for any $\obs$ with
$\oracle(\obs) = \ecomp{\oracle(\obs_a)}{\oracle(\obs_b)}$,
\begin{equation}
  \label{eq:montague}
  \fn(\obs) = \rcomp{\fn(\obs_a)}{\fn(\obs_b)} \ .
\end{equation}

In the linguistic contexts for which this definition was originally proposed, it
is straightforward to apply.  Inputs $\obs$ are natural language strings. Their
associated derivations $\oracle(\obs)$ are syntax trees, and composition of
derivations is syntactic composition. Representations $\repr$ are logical
representations of meaning (for an overview see
\citeauthor{vanBenthem96Handbook}, \citeyear{vanBenthem96Handbook}). To argue
that a particular fragment of language is compositional, it is sufficient to
exhibit a \emph{lexicon} $\eprims$ mapping words to their associated meaning
representations, and a \emph{grammar} for composing meanings where licensed by
derivations. Algorithms for learning grammars and lexicons from data are a
mainstay of semantic parsing approaches to language understanding
problems like question answering and instruction following
\citep{Zettlemoyer05CCG,Chen12Online,Artzi14Compact}.

But for questions of compositionality involving more general representation
spaces and more general analyses, the above definition presents two
difficulties:  (1) In the absence of a clearly-defined syntax of the kind
available in natural language, how do we identify lexicon entries: the
primitive parts from which representations are constructed? (2) What do we do
with languages like the one in \autoref{fig:teaser}d, which seem to exhibit
\emph{some} kind of regular structure, but for which the homomorphism condition
given in \autoref{eq:montague} cannot be made to hold exactly?

Consider again the example in \autoref{fig:teaser}. The oracle derivations tell
us to identify primitive representations for \emph{dark}, \emph{blue}, \emph{green},
\emph{square}, and \emph{triangle}. The derivations then suggest a process for
composing these primitives (\eg via string concatenation) to produce full
representations. The speaker model is compositional (in the sense of
\autoref{eq:montague}) as long as there is \emph{some} assignment of
representations to primitives such that for each model input, composing
primitive representations according to the oracle derivation reproduces the
speaker's prediction. 

In \autoref{fig:teaser} there is no assignment of strings to primitives that
reproduces model predictions exactly. But predictions can be reproduced
approximately---by taking \texttt{xx} to mean \emph{blue}, \texttt{aa} to mean
\emph{square}, etc. The quality of the approximation serves as a measure of the
compositionality of the true predictor: predictors that are mostly compositional
but for a few exceptions, or compositional but for the addition of some noise,
will be well-approximated on average, while arbitrary mappings from inputs to
representations will not.
This suggests that we should measure compositionality by searching for
representations that allow an explicitly compositional model to approximate the
true $\fn$ as closely as possible.  We define our evaluation procedure as
follows:
\vspace{.5em}

\begin{mybox}\myboxtitle{Tree Reconstruction Error (\err)}

  First choose :
  \begin{itemize}
    \item a distance function $\rdist : \rspace \times \rspace \to [0, \infty)$ satisfying $\rdist(\repr, \repr') = 0 \Leftrightarrow \repr = \repr'$
    \item a composition function $\rcomp{}{} : \rspace \times \rspace \to \rspace$
  \end{itemize}
Define 
$\einterp_\eparam(\expr)$,
a \emph{compositional approximation to $\fn$} with parameters $\eparam$,
as:
\begin{align*}
  \einterp_\eparam(\expr_i) 
    &= \eparam_i 
    && \textrm{for $\expr_i \in \eprims$} \\
  \einterp_\eparam\big(\ecomp{\expr}{\expr'}\big) 
    &= \rcomp{\einterp_\eparam(\expr)}{\einterp_\eparam(\expr')}
    && \textrm{for all other $\expr$}
\end{align*}
$\einterp_\eparam$ has one parameter vector $\eparam_i$ for every $\expr_i$ in
$\eprims$; these vectors are members of the representation space $\rspace$. \\

Given a dataset $\ospace$ of inputs $\obs_i$ with derivations $\expr_i =
\oracle(\obs_i)$, compute:
\begin{align}
  \label{eq:treopt}
  \eparam^* &= \argmin_\eparam \ \sum_i \rdist\big(\fn(\obs_i),
  \einterp_\eparam(\expr_i)\big) \\
  \intertext{Then we can define datum- and dataset-level evaluation metrics:}
  \label{eq:tre}
  \err(\obs) &= \rdist\big(\fn(\obs), \einterp_{\eparam^*}(\expr)\big) \\
  \err(\ospace) &= \frac{1}{n} \sum_i \err(\obs_i)
\end{align}
\vspace{-.5em}

\vspace{-1em}
\end{mybox} 
\vspace{-1em}

\paragraph{\err and compositionality} How well does the evaluation metric $\err(\ospace)$ capture the intuition behind
\autoref{eq:montague}? The definition above uses parameters $\eta_i$ to witness
the constructability of representations from parts, in this case by explicitly
optimizing over those parts rather than taking them to be given by $\fn$.  Each
term in \autoref{eq:treopt} is analogous to an instance of
\autoref{eq:montague}, measuring how well $\einterp_{\eparam^*}(\obs_i)$, the best
compositional prediction, matches the true model prediction $\fn(\obs_i)$. In
the case of models that are homomorphisms in the sense of \autoref{eq:montague},
\err reduces to the familiar case: \pagebreak

\begin{remark}
  $\err(\obs) = 0$ for all $\obs$ if and only if \autoref{eq:montague} holds
  exactly (that is, $\fn(\obs) = \rcomp{\fn(\obs_a)}{\fn(\obs_b)}$ for any
  $\obs, \obs_a, \obs_b$ with $\oracle(\obs) =
  \ecomp{\oracle(\obs_a)}{\oracle(\obs_b)}$).
\end{remark}\vspace{-1em}
\begin{proof}
  One direction follows immediately from defining $\einterp_{\eparam^*}(x) =
  \fn(x)$. For the other,
  %\begin{align*}
  $
    \fn(\obs) = \einterp(\oracle(\obs)) %\\
    = \einterp(\ecomp{\oracle(\obs_a)}{\oracle(\obs_b)}) %\\
    = \rcomp{\einterp(\oracle(\obs_a))}{\einterp(\oracle(\obs_b))} %\\
    = \rcomp{\fn(\obs_a)}{\fn(\obs_b)}
  %\end{align*}
  $.
\end{proof}

\paragraph{Learnable composition operators} The definition of \err leaves the
choice of $\rdist$ and $\rcomp{}{}$ up to the evaluator.  Indeed, if the exact form
of the composition function is not known $\emph{a priori}$, it is natural to
define $*$ with free parameters (as in e.g.\ \citeauthor{Baroni10Matrices},
\citeyear{Baroni10Matrices}), treat these as another learned part of $\einterp$,
and optimize them jointly with the $\eparam_i$. However, some care must be taken
when choosing $\rcomp{}{}$ (especially when learning it) to avoid trivial
solutions:\\[-0.5em]

\begin{remark}
  Suppose $\oracle$ is injective; that is, every $\obs \in \ospace$ is assigned
  a unique derivation. Then there is always some $\rcomp{}{}$ that achieves
  $\err(\ospace) = 0$: simply \emph{define} $\rcomp{\fn(\obs_a)}{\fn(\obs_b)} =
  \fn(\obs)$ for any $\obs, \obs_a, \obs_b$ as in the preceding definition, and
  set $\einterp = \fn$.
\end{remark}
In other words, some pre-commitment to a restricted composition function is
essentially inevitable: if we allow the evaluation procedure to select an
arbitrary composition function, the result will be trivial.  This paper features
experiments with $\rcomp{}{}$ in both a fixed functional form and a learned
parametric one.

\paragraph{Implementation details}

For models with continuous $\rspace$ and differentiable $\rdist$ and
$\rcomp{}{}$, $\err(\ospace)$ is also differentiable. \autoref{eq:treopt} can be
solved using gradient descent. We use this strategy in Sections
\ref{sec:bottleneck} and \ref{sec:humans}. For discrete $\rspace$, it may be
possible to find a continuous \emph{relaxation} with respect to which
$\rdist(\repr, \cdot)$ and $\rcomp{}{}$ are differentiable, and gradient descent
again employed. We use this strategy in \autoref{sec:generalization} (discussed
further there).  An implementation of an SGD-based \err solver is provided in
the accompanying software release. For other problems, task-specific optimizers
(e.g.\ machine translation alignment models;
  \citeauthor{Bogin18ConsistentSpeakers}, \citeyear{Bogin18ConsistentSpeakers})
  or general-purpose discrete optimization toolkits can be applied to
  \autoref{eq:treopt}. \\[-0.5em]

The
remainder of the paper highlights ways of using \err to answer questions about
compositionality that arise in machine learning problems of various kinds. 

\section{Compositionality and Learning Dynamics}
\label{sec:bottleneck}

We begin by studying the relationship between compositionality and learning
dynamics, focusing on the information bottleneck theory of representation
learning proposed by \citet{Tishby15Info}. This framework proposes that
learning in deep models consists of an error minimization phase followed by a
compression phase, and that compression is characterized by a decrease in the
mutual information between inputs and their computed representations. We
investigate the hypothesis that the compression phase finds a
\emph{compositional} representation of the input distribution, isolating
decision-relevant attributes
and discarding irrelevant information.

\begin{wrapfigure}{r}{0.58\textwidth}
  \vspace{-1.4em}
  \centering
  \strut
  \includegraphics[width=0.55\textwidth,clip,trim=.5cm 12.3cm 8cm .5cm]{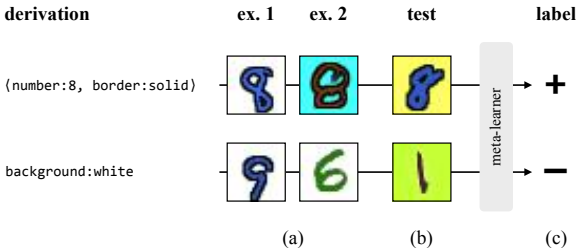}
  \vspace{-.15em}
  \caption{
    Meta-learning task: learners are presented with two example images
    depicting a visual concept (a), and must determine whether a third
    image (b) is an example of the same concept (c). 
  }
  \label{fig:meta}
  \vspace{-.5em}
\end{wrapfigure}

Data comes from a few-shot classification task.  Because our analysis focuses on
compositional hypothesis classes, we use visual concepts from the Color MNIST
dataset of \citet{Seo17VisDial} (\autoref{fig:meta}). We predict classifiers in
a meta-learning framework \citep{Schmidhuber87Thesis, Santoro16MANN}: for each
sub-task, the learner is presented with two images corresponding to some
compositional visual concept (\eg ``digit 8 on a black background'' or ``green
with heavy stroke'') and must determine whether a held-out image is an example
of the same visual concept.

Given example images $\obs_1$ and $\obs_2$, a test image $\obs^*$, and label
$y^*$, the model computes:
\begin{align*}
  z_i &= \cnn(\obs_i) \textrm{ for } i \in \{1, 2, *\} \\
  \repr &= \tanh(W (z_1 + z_2)) \\
  \hat{y} &= \repr^\top z_t
\end{align*}
We use $\theta$ as the representation of a classifier for analysis.
The model is trained to minimize the logistic loss between logits $\hat{y}$ and
ground-truth labels $y^*$. More details are given in \autoref{app:models}.

\paragraph{Compositional structure}

Visual concepts used in this task are all single attributes or conjunctions of
attributes; i.e.\ their associated derivations are of the form $\texttt{attr}$
or $\ecomp{\texttt{attr}_1}{\texttt{attr}_2}$. Attributes include background
color, digit color, digit identity and stroke type. The composition function
$\rcomp{}{}$ is addition and the distance $\rdist(\repr, \repr')$ is cosine
similarity $1 - \repr^\top \repr' / (\| \repr \| \| \repr' \|)$. 

\paragraph{Evaluation}
The training dataset consists of 9000 image triplets, evenly balanced between
positive and negative classes, with a validation set of 500 examples. At
convergence, the model achieves validation accuracy of $75.2\%$ on average over
ten training runs. (Perfect accuracy is not possible because the true classifier
is not fully determined by two training examples).  We explore the relationship
between the information bottleneck and compositionality by comparing
$\err(\ospace)$ to the mutual information $I(\repr; \obs)$ between
representations and inputs over the course of training. Both
quantities are computed on the validation set, calculating $\err(\ospace)$ as
described in \autoref{sec:tre} and $I(\repr; X)$ as described in
\citet{ShwartzZiv17Info}. (For discussion of limitations of this approach to
computing mutual information between inputs and representations, see
\citeauthor{Saxe18Info}, \citeyear{Saxe18Info}.)

\autoref{fig:info} shows the relationship between $\err(\ospace)$ and
$I(\repr;X)$. Recall that small \err is indicative of a high degree of
compositionality. It can be seen
that both mutual information and reconstruction error are initially low (because
representations initially encode little about distinctions between inputs). Both
increase over the course of training, and decrease together after mutual
information reaches a maximum (\autoref{fig:info}a). This pattern holds if we
plot values from multiple training runs at the same time (\autoref{fig:info}b),
or if we consider only the postulated compression phase (\autoref{fig:info}c).
These results are consistent with the hypothesis that compression in the
information bottleneck framework is associated with the discovery of
compositional representations.

\begin{figure}
  \centering
  \hspace{-10mm}
  \includegraphics[width=0.32\textwidth]{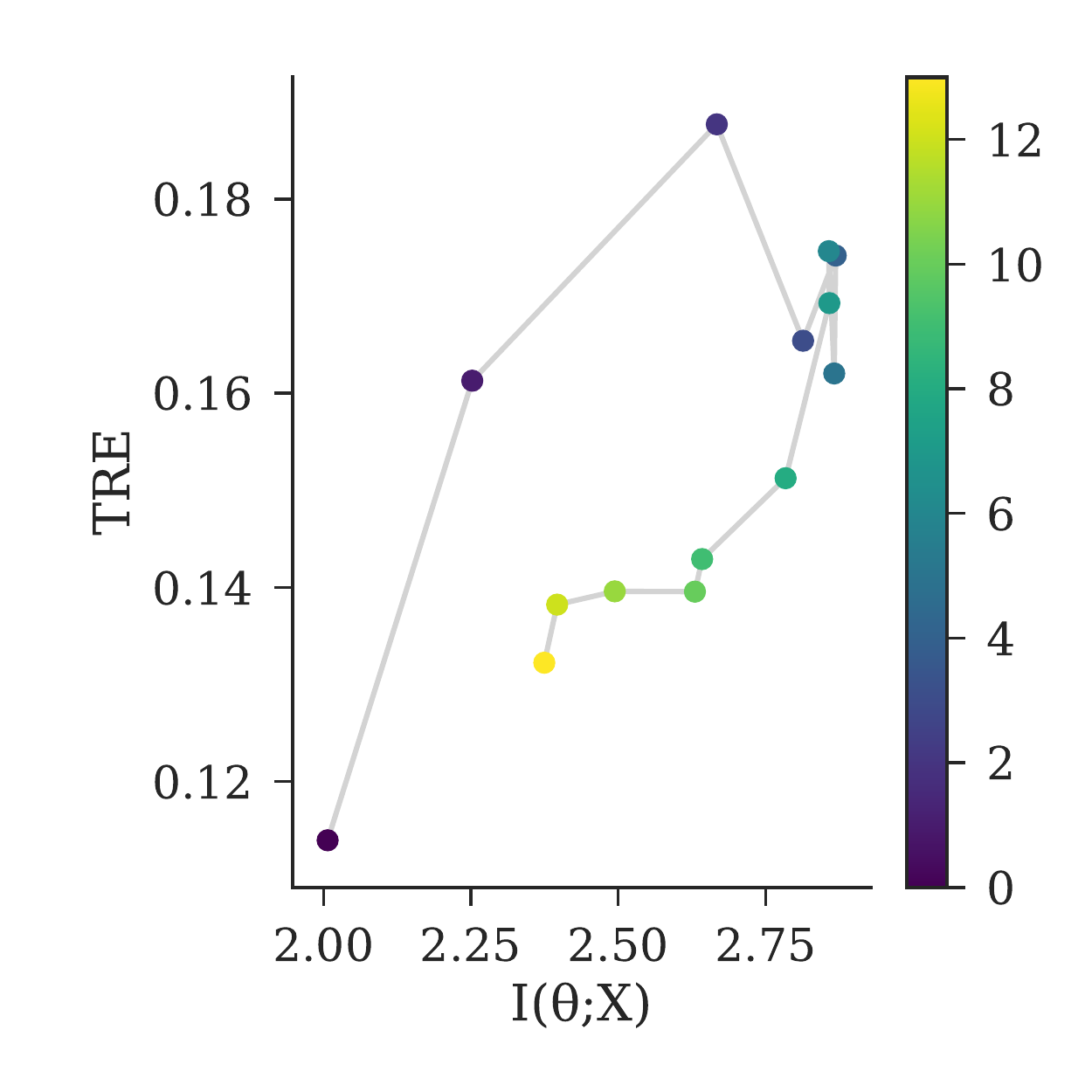}
  \includegraphics[width=0.32\textwidth]{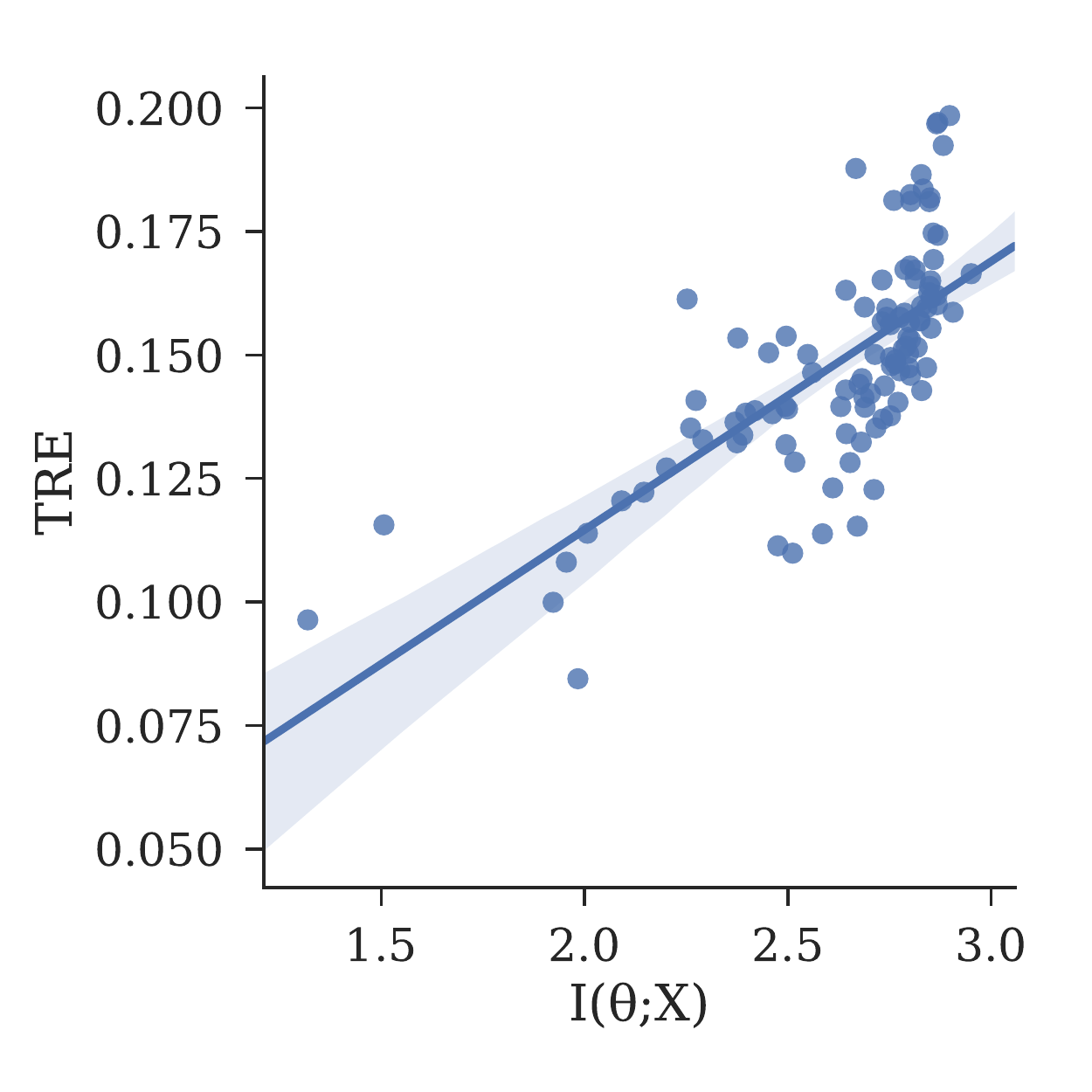}
  \includegraphics[width=0.32\textwidth]{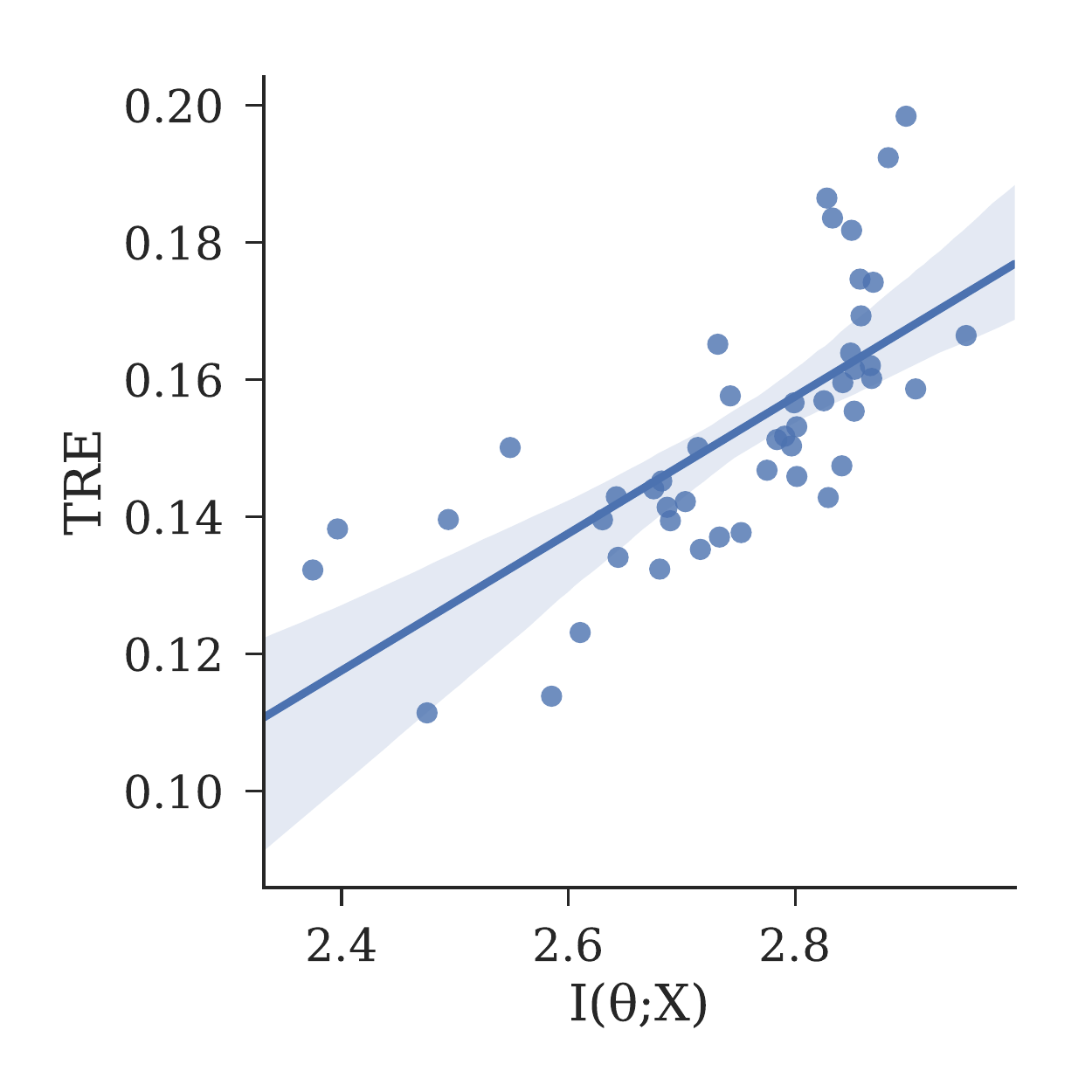} \\[-.5em]
  \strut \hfill (a) \hfill \hspace{3mm} \hfill (b) \hfill \hspace{1mm} \hfill (c) \hfill
  \hspace{2mm} \strut \vspace{-.5em}
  \caption{
    Relationship between reconstruction error $\err$ and mutual
    information $I(\repr;X)$ between inputs and representations. (a) Evolution
    of the two quantities over the course of a single run. Both initially
    increase, then decrease. The color bar shows the training epoch. (b) Values
    from ten training runs. (c) Values from the second half of each training
    run, taken to begin when $I(\repr;X)$ reaches a maximum. In (b) and (c), the
    observed correlation is significant: respectively ($r=0.70$, $p <
    1e\!-\!10$) and ($r=0.71$, $p < 1e\!-\!8$).
  }
  \label{fig:info}
  \vspace{-1em}
\end{figure}

\section{Compositionality and Human Judgments}
\label{sec:humans}

Next we investigate a more conventional representation learning task.
High-dimensional embeddings of words and phrases are useful for many natural
language processing applications \citep{Turian10Representations}, and many
techniques exist to learn them from unlabeled text
\citep{Deerwester90LSA,Mikolov13Embeddings}.  The question we wish to explore is
not whether phrase vectors are compositional in aggregate, but rather how
compositional individual phrase representations are.  Our hypothesis is that
bigrams whose representations have low $\err$ are those whose
\emph{meaning} is essentially compositional, and well-explained by the
constituent words, while bigrams with large reconstruction error will correspond
to non-compositional multi-word expressions \citep{Nattinger92MWE}.

This task is already well-studied in the natural language processing literature
\citep{Salehi15Comp}, and the analysis we present differs only in the use of
\err to search for atomic representations rather than taking them to be given by
pre-trained word representations. Our goal is to validate our approach in a
language processing context, and show how existing work on compositionality (and
representations of natural language in particular) fit into the more general
framework proposed in the current paper.

We train embeddings for words and bigrams using the CBOW objective of
\citet{Mikolov13Embeddings} using the implementation provided in FastText
\citep{Bojanowski17FastText} with 100-dimensional vectors and a context size of
5. Vectors are estimated from a 250M-word subset of the Gigaword dataset
\citep{Parker11Gigaword}. More details are provided in \autoref{app:models}.

\paragraph{Compositional structure} 

We want to know how close phrase embeddings are to the composition of their
constituent word embeddings. We define derivations for words and phrases in the
natural way: single words $w$ have primitive derivations $\expr = w$; bigrams
$w_1 w_2$ have derivations of the form $\ecomp{w_1}{w_2}$.  The composition
function is again vector addition and distance is cosine distance. 
(Future work might explore learned composition functions as in e.g.\
\citeauthor{Grefenstette13Regression}, \citeyear{Grefenstette13Regression}, for
future work.)
We compare
bigram-level judgments of compositionality computed by \err with a dataset of
human judgments about noun--noun compounds \citep{Reddy11Comp}. In this dataset,
humans rate bigrams as compositional on a scale from 0 to 5, with highly
conventionalized phrases like \emph{gravy train} assigned low scores and
\emph{graduate student} assigned high ones.

\paragraph{Results}

We reproduce the results of \citet{Salehi15Comp} within the tree reconstruction
error framework: for a given $\obs$, $\err(\obs)$ is anticorrelated with human
judgments of compositionality ($\rho=-0.34$, $p<0.01$). Collocations rated
``most compositional'' by our approach (i.e.\ with lowest \err) are:
\emph{application form}, \emph{polo shirt}, \emph{research project}; words rated
``least compositional'' are \emph{fine line}, \emph{lip service}, and \emph{nest
egg}.

\section{Compositionality and Similarity}
\label{sec:similarity}

The next section aims at providing a formal, rather than experimental,
characterization of the relationship between \err and another perspective on
the analysis of representations with help from oracle derivations.
\citet{Brighton06Topo} introduce a notion of \emph{topographic similarity},
arguing that a learned representation captures relevant domain structure if
distances between learned representations are correlated with distances between
their associated derivations. This can be viewed as providing a weak form of
evidence for compositionality---if the distance function rewards pairs of
representations that share overlapping substructure (as might be the case with
\eg string edit distance), edit distance will be expected to correlate with some
notion of derivational similarity \citep{Lazaridou2018PixelComm}.

In this section we aim to clarify the relationship between the two evaluations.
To do this we first need to equip the space of derivations described in
\autoref{sec:background} with a distance function. As the derivations considered
in this paper are all tree-structured, it is natural to use a simple \emph{tree
edit distance} \citep{Bille05TreeEditDist} for this purpose. We claim the
following:

\begin{prop}
  \label{prop:metric}
  Let $\einterp = \einterp_{\eparam^*}$ be an approximation to $\fn$ estimated
  as in \autoref{eq:treopt}, with all $\err(\obs) \leq \eps$ for some $\eps$.
  Let $\edist$ be the tree edit distance (defined formally in \autoref{app:math},
  Definition 2), and
  let $\rdist$ be any distance on $\rspace$ satisfying the following properties:
  \begin{enumerate}
    \item
      $\rdist(\einterp(\expr_i), \einterp(\expr_j)) \leq 1$ for $\expr_i,
      \expr_j \in \eprims$
    \item
      $\rdist(\einterp(\expr), 0) \leq 1$ for $\expr \in \eprims$, where $0$ is
      the identity element for $\rcomp{}{}$.
    \item
      $\rdist(\rcomp{\repr_i}{\repr_j}, \rcomp{\repr_k}{\repr_\ell}) \leq
      \rdist(\repr_i, \repr_k) + \rdist(\repr_j, \repr_\ell)$. \\(This condition
      is satisfied by any translation-invariant metric.)
  \end{enumerate}

  Then $\edist$ is an approximate upper bound on $\rdist$: for any $\obs$, $\obs'$ with $\expr =
  \oracle(\obs)$, $\expr' = \oracle(\obs')$,
  \begin{equation}
  \delta(\fn(\obs), \fn(\obs')) \leq \edist(\expr, \expr') + 2\eps \ . 
  \end{equation}
\end{prop}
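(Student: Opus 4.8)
The plan is to bound $\rdist(\fn(\obs),\fn(\obs'))$ by inserting the two compositional approximations $\einterp(\expr)$ and $\einterp(\expr')$ as intermediate points and applying the triangle inequality for $\rdist$ (which I take to be a genuine metric). This gives
\[
  \rdist(\fn(\obs),\fn(\obs')) \;\le\; \rdist(\fn(\obs),\einterp(\expr)) + \rdist(\einterp(\expr),\einterp(\expr')) + \rdist(\einterp(\expr'),\fn(\obs')).
\]
The first and third terms are exactly $\err(\obs)$ and $\err(\obs')$, each at most $\eps$ by hypothesis, so it remains to show the middle term is at most $\edist(\expr,\expr')$.

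For that I would prove the key lemma that $\einterp$ is $1$-Lipschitz as a map from the space of derivations equipped with tree edit distance to $(\rspace,\rdist)$, i.e.\ $\rdist(\einterp(\expr),\einterp(\expr')) \le \edist(\expr,\expr')$. Since $\edist(\expr,\expr')$ is by definition the length of a shortest sequence of single edit operations transforming $\expr$ into $\expr'$, it suffices (by induction on that length, again using the triangle inequality) to show that any one edit operation changes the value of $\einterp$ by at most $1$ in $\rdist$.

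There are three operations to check. (i) Relabeling a leaf from primitive $\expr_i$ to primitive $\expr_j$: at the site of the edit the value changes from $\eparam_i = \einterp(\expr_i)$ to $\eparam_j = \einterp(\expr_j)$, a change of at most $1$ by property~1. (ii) Inserting or deleting a node: here I would use the identity element, writing a subtree value $a$ as $\rcomp{a}{0}$, so that adjoining a primitive $\expr_i$ turns $\rcomp{a}{0}$ into $\rcomp{a}{\einterp(\expr_i)}$, whose distance from $\rcomp{a}{0}$ is at most $\rdist(\einterp(\expr_i),0) \le 1$ by properties~3 and~2 (symmetrically for deletion). In every case the local change is then propagated up through the enclosing compositions: property~3, specialized by setting one argument pair equal, says composition is non-expansive in each argument, so a perturbation of size $\le 1$ introduced at one node induces a perturbation of size $\le 1$ at the root. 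Summing over the edit sequence gives $\rdist(\einterp(\expr),\einterp(\expr')) \le \edist(\expr,\expr')$, and combining with the triangle-inequality decomposition above yields the claimed bound $\edist(\expr,\expr') + 2\eps$.

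The main obstacle I anticipate is the insertion/deletion case: I need to reconcile the informal picture above with the precise recursive definition of tree edit distance in the appendix (Definition~2), being careful that derivation trees have unlabeled binary ``bracketing'' internal nodes (so relabeling only ever touches leaves) and that a single node insertion or deletion can re-associate an arbitrarily large subtree — which is harmless precisely because property~3 gives non-expansiveness rather than mere continuity, but does require spelling out how $\einterp$ behaves under re-bracketing. The triangle-inequality step, the leaf-relabel case, and the induction on edit length are then routine.
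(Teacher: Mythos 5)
Your outer argument coincides with the paper's: insert $\einterp(\expr)$ and $\einterp(\expr')$ by the triangle inequality, bound the two end terms by $\eps$ using $\err(\obs),\err(\obs')\leq\eps$, and reduce everything to the key lemma $\rdist(\einterp(\expr),\einterp(\expr'))\leq\edist(\expr,\expr')$. The divergence is in how that lemma is proved. The paper proves it by structural induction directly on the recursive definition of $\edist$ (Definition 2 in the appendix), with a helper lemma $\rdist(\einterp(\expr),0)\leq|\expr|$ (from Conditions 2 and 3) that absorbs the branches of the recursion charging $|\expr_k|$ for an entire unmatched subtree. You instead propose induction on the length of a shortest sequence of single edit operations, arguing each edit moves $\einterp$ by at most $1$.

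That is where the genuine gap lies. First, $\edist$ is not defined operationally in this paper, and your proof needs the bridge that the Definition-2 value upper-bounds the cost of some legal edit sequence in \emph{your} edit model (so that $\rdist\leq$ operational cost $\leq\edist$); this is never established, and it is not automatic, since Definition 2 charges $|\expr_k|$ (leaf count, no internal-node costs) for attaching or dropping a subtree and only allows a restricted family of alignments. Second, and more seriously, your dismissal of the re-association worry is wrong: you say an insertion or deletion that re-associates an arbitrarily large subtree is harmless ``because property 3 gives non-expansiveness.'' Condition 3 only propagates a perturbation upward through enclosing compositions; it does not bound the perturbation created at the edit site by a re-bracketing. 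Since $\rcomp{}{}$ need not be associative, $\rcomp{(\rcomp{\repr_a}{\repr_b})}{\repr_c}$ and $\rcomp{\repr_a}{(\rcomp{\repr_b}{\repr_c})}$ can be far apart under Conditions 1--3, so any unit-cost edit of that kind falsifies the claim that each edit changes $\einterp$ by at most $1$. To repair your route you would have to restrict the edit model to leaf relabelings and insertions/deletions of a single primitive attached as a sibling under a fresh bracket (there your use of Conditions 1, 2, 3 does give a per-edit change $\leq 1$), and then prove that every value produced by the Definition-2 recursion is realized by such a sequence---at which point you are essentially redoing the paper's structural induction, where the identity-element lemma handles the whole-subtree terms in one step.
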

In other words, representations cannot be much farther apart than the
derivations that produce them. Proof is provided in \autoref{app:math}. 

We emphasize that small $\err$ is not a sufficient condition for topographic
similarity as defined by \citet{Brighton06Topo}: very different derivations
might be associated with the same representation (\eg when representing
arithmetic expressions by their results). But this result does demonstrate that
compositionality imposes some constraints on the inferences that can be drawn
from similarity judgments between representations.

\section{Compositionality and Generalization}
\label{sec:generalization}

In our final set of experiments, we investigate the relationship between
compositionality and generalization. Here we focus on communication games like
the one depicted in \autoref{fig:teaser} and in more detail in
\autoref{fig:comm}. As in the previous section, existing work argues for a
relationship between compositionality and generalization, claiming that agents
need compositional communication protocols to generalize to unseen referents
\citep{Kottur17Takedown,Choi18Obverter}.  Here we are able to evaluate this
claim empirically by training a large number of agents from random initial
conditions, measuring the compositional structure of the language that emerges,
and seeing how this relates to their performance on both familiar and novel
objects.

\begin{wrapfigure}{r}{0.35\textwidth}
  \vspace{-2.5em}
  \centering
  \strut
  \includegraphics[width=0.3\textwidth,clip,trim=0.5cm 8.8cm 19.2cm 0cm]{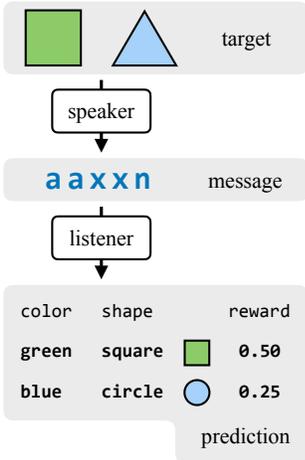}
  \vspace{-.5em}
  \caption{
    The communication task: A \emph{speaker} model observes a pair of target
    objects, and
    sends a description of the objects (as a discrete code) to a \emph{listener}
    model. The listener attempts to reconstruct the targets, receiving
    fractional reward for partially-correct predictions.
  }
  \label{fig:comm}
  \vspace{-2.7em}
\end{wrapfigure}

Our experiment focuses on a \emph{reference game} \citep{Gatt07TUNA}. 
%%As discussed in the introduction, this is well-studied version of the
%%\emph{reference game} \citep{Gatt07TUNA}. 
Two policies are trained: a speaker
and a listener. The speaker observes pair of \emph{target} objects represented
with a feature vector. The speaker then sends a message (coded as a
discrete character sequence) to the listener model. The listener observes this
message and attempts to reconstruct the target objects by predicting a sequence
of attribute sets. If all objects are predicted correctly, both the speaker and
the listener receive a reward of 1 (partial credit is awarded for partly-correct
objects; \autoref{fig:comm}).

Because the communication protocol is discrete, policies are jointly trained
using a policy gradient objective \citep{Williams92Reinforce}. The speaker and
listener are implemented with RNNs; details are provided in
\autoref{app:models}.

\paragraph{Compositional structure}
Every target referent consists of two objects; each object has two attributes.
The derivation associated with each communicative task thus has the tree
structure
$\ecomp{\ecomp{\texttt{attr}_{1a}}{\texttt{attr}_{1b}}}{\ecomp{\texttt{attr}_{2a}}{\texttt{attr}_{2b}}}$.
We \emph{hold out} a subset of these object pairs at training time
to evaluate generalization: in each training run, 1/3 of possible reference
candidates are never presented to the agent at training time.

Where the previous examples involved a representation space of real embeddings,
here representations are fixed-length discrete codes. Moreover, the
derivations themselves have a more complicated semantics than in Sections
\ref{sec:bottleneck} and \ref{sec:humans}: order matters, and a commutative
operation like addition cannot capture the distinction between
$\ecomp{\ecomp{\texttt{green}}{\texttt{square}}}{\ecomp{\texttt{blue}}{\texttt{triangle}}}$
and
$\ecomp{\ecomp{\texttt{green}}{\texttt{triangle}}}{\ecomp{\texttt{blue}}{\texttt{square}}}$.
We thus need a different class of composition and distance operations. We
represent each agent message as a sequence of one-hot vectors, and take the
error function $\rdist$ to be the $\ell_1$ distance between vectors.  The
composition function has the form:
\begin{equation}
  \rcomp{\theta}{\theta'} = A\theta + B\theta'
\end{equation}
with free composition parameters $\eparam_* = \{A, B\}$ in
\autoref{eq:treopt}. Viewing each message as a matrix whose rows are token count
histograms, this composition function can rearrange the tokens in $\theta$ and
$\theta'$ across different positions of the input string, but cannot affect the
choice of the tokens themselves; this makes it possible to model non-commutative
aspects of string production.
To compute $\err$ via gradient descent, we allow the elements of $\eprims$ to be
arbitrary vectors (intuitively assigning fractional token counts to string
indices) rather than restricting them to one-hot indicators. With this change,
both $\rdist$ and $\rcomp{}{}$ have subgradients and can be optimized using the
same procedure as in preceding sections.

\begin{figure}[t]
  \strut\hfill
  \includegraphics[width=0.32\textwidth]{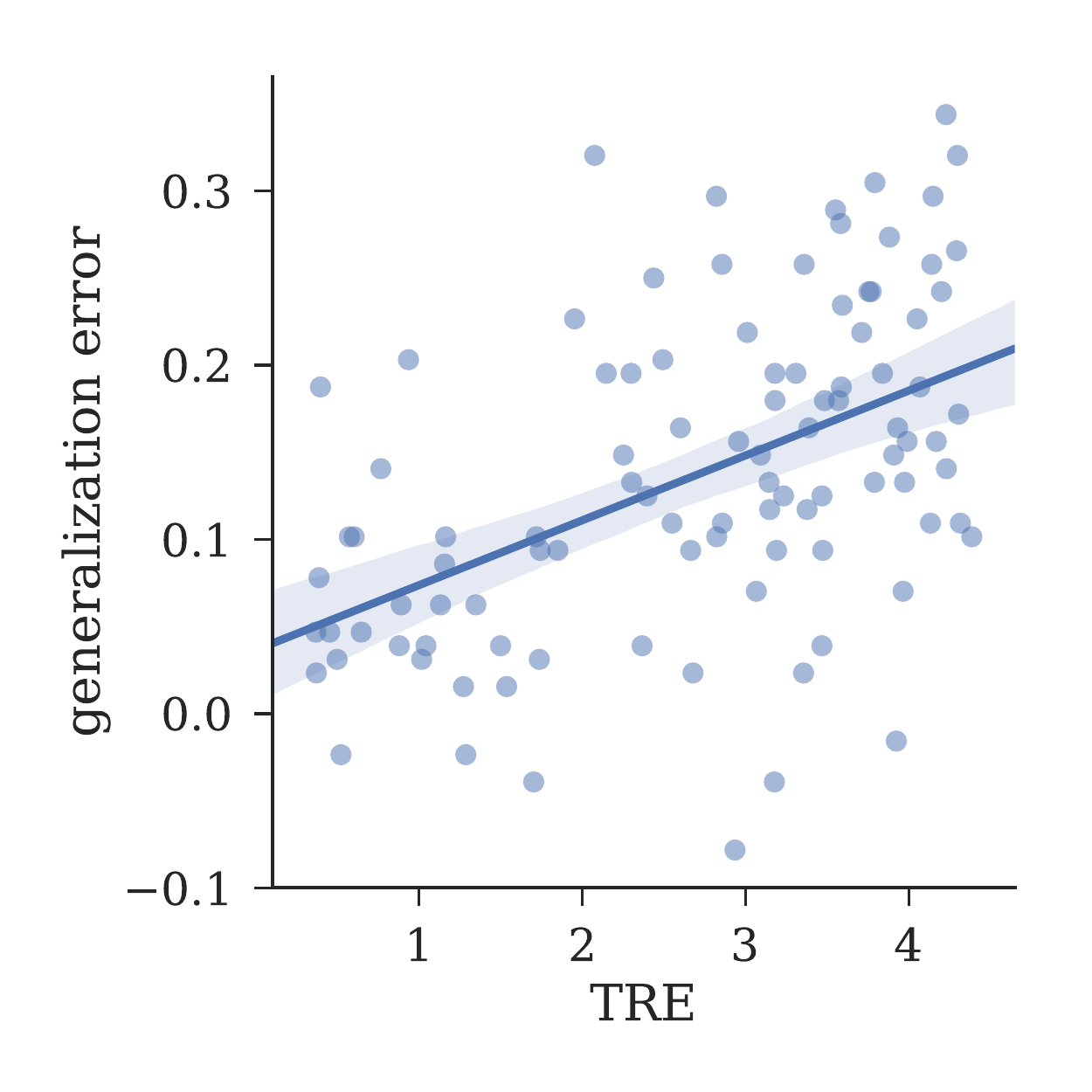}
  \hfill
  \includegraphics[width=0.32\textwidth]{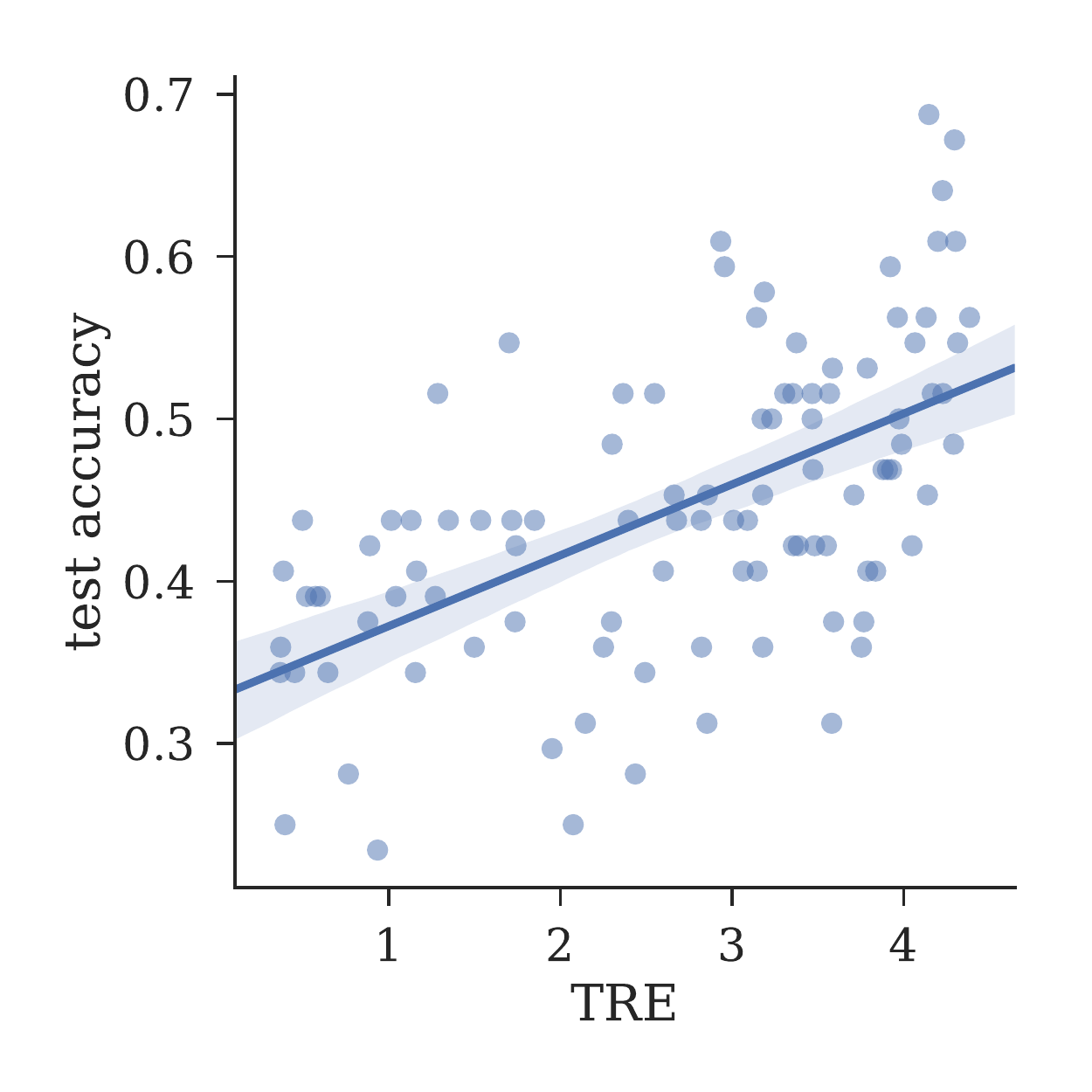} \hfill \strut \\[-.5em]
  \strut \hspace{39.5mm} (a) \hspace{56.5mm} (b)
  \caption{
    Relationship between \err and reward. (a) Compositional languages
    exhibit lower generalization error, measured as the difference between train
    and test reward ($r=0.50$, $p<1e\!-\!6$). (b) However, compositional
    languages also exhibit lower absolute performance ($r=0.57$, $p<1e\!-\!9$).
    Both facts remain true even if we restrict analysis to ``successful''
    training runs in which agents achieve a reward $> 0.5$
    on held-out referents ($r=0.6$, $p<1e\!-\!3$ and $r=0.38$, $p<0.05$
    respectively).
  }
  \label{fig:comm:graphs}
\end{figure}

\begin{figure}
	\centering
  {\tt\footnotesize
  \begin{tabular}{lrr}
		\rm & \rm Language A & \rm Language B \\
    \midrule
		((red circle) (blue triangle)) & jjjj & jeoo  \\
		((red circle) (blue star))     & oppp & jjjj  \\
		((red circle) (blue circle))   & oopp & jjjj  \\
		((red circle) (blue square))   & oopp & jjjb  \\
		((red square) (blue triangle)) & jjjj & jbjj  \\
		((red square) (blue star))     & oooo & jbjj  \\
		((red square) (blue circle))   & oooo & jbbb  \\
		((red square) (blue square))   & oooo & jbbb  \\
		\midrule
    \rm $\err$     & \rm 4.30 & \rm 2.96 \\
    \rm Train reward & \rm 0.78 & \rm 0.75 \\
    \rm Test reward  & \rm 0.61 & \rm 0.59 \\
  \end{tabular}
	}
  \caption{
    Fragment of languages resulting from two multiagent training runs. In the
    first section, the left column shows the target referent, while the
    remaining columns show the message generated by speaker in the given
    training run after observing the referent. The two languages have
    substantially different $\err$, but induce similar listener performance
    (\emph{Train} and \emph{Test reward}).
  }

  \label{fig:comm:results}
\end{figure}

\paragraph{Results}

We train 100 speaker--listener pairs with random initial parameters and measure
their performance on both training and test sets. Our results suggest a more
nuanced view of the relationship between compositionality and generalization
than has been argued in the existing literature. \err is significantly
correlated with generalization error (measured as the difference between
training accuracies, \autoref{fig:comm:graphs}a). However, \err is also
significantly correlated with absolute model reward
(\autoref{fig:comm:graphs}b)---``compositional'' languages more often result
from poor communication strategies than successful ones. This is largely a
consequence of the fact that many languages with low \err correspond to trivial
strategies (for example, one in which the speaker sends the same
message regardless of its observation) that result in poor overall performance. 

Moreover, despite the correlation between \err and generalization error, low
\err is by no means a necessary condition for good generalization. We can use
our technique to automatically mine a collection of training runs for languages
that achieve good generalization performance at both low and high levels of
compositionality. Examples of such languages are shown in
\autoref{fig:comm:results}.

\section{Conclusions}

We have introduced a new evaluation method called \err for generating graded
judgments about compositional structure in representation learning problems
where the structure of the observations is understood.  \err infers a set of
primitive meaning representations that, when composed, approximate the observed
representations, then measures the quality of this approximation.  We have
applied \err-based analysis to four different problems in representation
learning, relating compositionality to learning dynamics, linguistic
compositionality, similarity and generalization.

Many interesting questions regarding compositionality and representation
learning remain open. The most immediate is how to generalize \err to the
setting where oracle derivations are not available; in this case
\autoref{eq:treopt} must be solved jointly with an unsupervised grammar
induction problem \citep{Klein04CCM}.  Beyond this, it is our hope that this
line of research opens up two different kinds of new work: better understanding
of existing machine learning \emph{models}, by providing a new set of tools for
understanding their representational capacity; and better understanding of
\emph{problems}, by better understanding the kinds of data distributions and
loss functions that give rise to compositional- or
non-compositional representations of observations.

\subsection*{Reproducibility}

Code and data for all experiments in this paper are provided at \\
\url{https://github.com/jacobandreas/tre}.

\subsection*{Acknowledgments}

Thanks to Daniel Fried and David Gaddy for feedback on an early draft of this
paper. The author was supported by a Facebook Graduate Fellowship at the time of
writing.

\bibliography{compositionality}

\begin{thebibliography}{52}
\providecommand{\natexlab}[1]{#1}
\providecommand{\url}[1]{\texttt{#1}}
\expandafter\ifx\csname urlstyle\endcsname\relax
  \providecommand{\doi}[1]{doi: #1}\else
  \providecommand{\doi}{doi: \begingroup \urlstyle{rm}\Url}\fi

\bibitem[Andreas \& Klein(2017)Andreas and Klein]{Andreas17RNNSyn}
Jacob Andreas and Dan Klein.
\newblock Analogs of linguistic structure in deep representations.
\newblock In \emph{Proceedings of the Conference on Empirical Methods in
  Natural Language Processing}, 2017.

\bibitem[Artzi et~al.(2014)Artzi, Das, and Petrov]{Artzi14Compact}
Yoav Artzi, Dipanjan Das, and Slav Petrov.
\newblock Learning compact lexicons for {CCG} semantic parsing.
\newblock In \emph{Proceedings of the Conference on Empirical Methods in
  Natural Language Processing}, pp.\  1273--1283, Doha, Qatar, 2014.
  Association for Computational Linguistics.
\newblock URL \url{http://www.aclweb.org/anthology/D14-1134}.

\bibitem[Baroni \& Zamparelli(2010)Baroni and Zamparelli]{Baroni10Matrices}
Marco Baroni and Roberto Zamparelli.
\newblock Nouns are vectors, adjectives are matrices: Representing
  adjective-noun constructions in semantic space.
\newblock In \emph{Proceedings of the Conference on Empirical Methods in
  Natural Language Processing}, pp.\  1183--1193, Cambridge, MA, USA, 2010.

\bibitem[Bille(2005)]{Bille05TreeEditDist}
Philip Bille.
\newblock A survey on tree edit distance and related problems.
\newblock \emph{Theoretical computer science}, 2005.

\bibitem[Bogin et~al.(2018)Bogin, Geva, and Berant]{Bogin18ConsistentSpeakers}
Ben Bogin, Mor Geva, and Jonathan Berant.
\newblock Emergence of communication in an interactive world with consistent
  speakers.
\newblock In \emph{NeurIPS Workshop on Emergent Communication}, 2018.

\bibitem[Bojanowski et~al.(2017)Bojanowski, Grave, Joulin, and
  Mikolov]{Bojanowski17FastText}
Piotr Bojanowski, Edouard Grave, Armand Joulin, and Tomas Mikolov.
\newblock Enriching word vectors with subword information.
\newblock \emph{Transactions of the Association for Computational Linguistics},
  2017.

\bibitem[Brighton(2002)]{Brighton02Transmission}
Henry Brighton.
\newblock Compositional syntax from cultural transmission.
\newblock \emph{Artificial Life}, 2002.

\bibitem[Brighton \& Kirby(2006)Brighton and Kirby]{Brighton06Topo}
Henry Brighton and Simon Kirby.
\newblock Understanding linguistic evolution by visualizing the emergence of
  topographic mappings.
\newblock \emph{Artificial life}, 2006.

\bibitem[Carnap(1937)]{Carnap37Logical}
Rudolf Carnap.
\newblock \emph{Logical syntax of language}.
\newblock 1937.

\bibitem[Chen(2012)]{Chen12Online}
David~L Chen.
\newblock Fast online lexicon learning for grounded language acquisition.
\newblock In \emph{Proceedings of the Annual Meeting of the Association for
  Computational Linguistics}, pp.\  430--439, 2012.

\bibitem[Cho et~al.(2014)Cho, van Merri{\"e}nboer, Bahdanau, and
  Bengio]{Cho14GRU}
Kyunghyun Cho, Bart van Merri{\"e}nboer, Dzmitry Bahdanau, and Yoshua Bengio.
\newblock On the properties of neural machine translation: Encoder-decoder
  approaches.
\newblock In \emph{Proceedings of the Workshop on Syntax, Semantics and
  Structure in Statistical Translation}, 2014.

\bibitem[Choi et~al.(2018)Choi, Lazaridou, and de~Freitas]{Choi18Obverter}
Edward Choi, Angeliki Lazaridou, and Nando de~Freitas.
\newblock Compositional obverter communication learning from raw visual input.
\newblock In \emph{Proceedings of the International Conference on Learning
  Representations}, 2018.

\bibitem[Clark(2012)]{Clark12VSM}
Stephen Clark.
\newblock Vector space models of lexical meaning, 2012.

\bibitem[Coecke et~al.(2010)Coecke, Sadrzadeh, and Clark]{Coecke10Tensor}
Bob Coecke, Mehrnoosh Sadrzadeh, and Stephen Clark.
\newblock Mathematical foundations for a compositional distributional model of
  meaning.
\newblock \emph{arXiv preprint arXiv:1003.4394}, 2010.

\bibitem[Deerwester et~al.(1990)Deerwester, Dumais, Landauer, Furnas, and
  Harshman]{Deerwester90LSA}
Scott~C. Deerwester, Susan~T Dumais, Thomas~K. Landauer, George~W. Furnas, and
  Richard~A. Harshman.
\newblock Indexing by latent semantic analysis.
\newblock \emph{Journal of the American Society for Information Science},
  41\penalty0 (6):\penalty0 391--407, 1990.

\bibitem[Dircks \& Stoness(1999)Dircks and Stoness]{Dircks99LanguageChange}
Christopher Dircks and Scott Stoness.
\newblock Effective lexicon change in the absence of population flux.
\newblock \emph{Advances in Artificial Life}, pp.\  720--724, 1999.

\bibitem[Dong \& Lapata(2016)Dong and Lapata]{Dong16Seq2SeqSemparse}
Li~Dong and Mirella Lapata.
\newblock Language to logical form with neural attention.
\newblock In \emph{Proceedings of the Annual Meeting of the Association for
  Computational Linguistics}, 2016.

\bibitem[Enquist \& Arak(1994)Enquist and Arak]{Enquist94Symmetry}
Magnus Enquist and Anthony Arak.
\newblock Symmetry, beauty and evolution.
\newblock \emph{Nature}, 1994.

\bibitem[Fodor \& Lepore(2002)Fodor and Lepore]{Fodor02Compositionality}
Jerry~A Fodor and Ernest Lepore.
\newblock \emph{The compositionality papers}.
\newblock Oxford University Press, 2002.

\bibitem[Fyshe et~al.(2015)Fyshe, Wehbe, Talukdar, Murphy, and
  Mitchell]{Fyshe15Compositional}
Alona Fyshe, Leila Wehbe, Partha~P Talukdar, Brian Murphy, and Tom~M Mitchell.
\newblock A compositional and interpretable semantic space.
\newblock In \emph{Proceedings of the Annual Meeting of the North American
  Chapter of the Association for Computational Linguistics}, pp.\  32--41,
  2015.

\bibitem[Gatt et~al.(2007)Gatt, Van Der~Sluis, and Van~Deemter]{Gatt07TUNA}
Albert Gatt, Ielka Van Der~Sluis, and Kees Van~Deemter.
\newblock Evaluating algorithms for the generation of referring expressions
  using a balanced corpus.
\newblock In \emph{Proceedings of the Eleventh European Workshop on Natural
  Language Generation}, pp.\  49--56. Proceedings of the Annual Meeting of the
  Association for Computational Linguistics, 2007.

\bibitem[Grefenstette et~al.(2013)Grefenstette, Dinu, Zhang, Sadrzadeh, and
  Baroni]{Grefenstette13Regression}
Edward Grefenstette, Georgiana Dinu, Yao-Zhong Zhang, Mehrnoosh Sadrzadeh, and
  Marco Baroni.
\newblock Multi-step regression learning for compositional distributional
  semantics.
\newblock \emph{Proceedings of the International Conference on Computational
  Semantics}, 2013.

\bibitem[Havrylov \& Titov(2017)Havrylov and Titov]{Havrylov17Comm}
Serhii Havrylov and Ivan Titov.
\newblock Emergence of language with multi-agent games: learning to communicate
  with sequences of symbols.
\newblock In \emph{Advances in Neural Information Processing Systems}, 2017.

\bibitem[Irsoy \& Cardie(2014)Irsoy and Cardie]{Irsoy2014DeepTreeRnn}
Ozan Irsoy and Claire Cardie.
\newblock Deep recursive neural networks for compositionality in language.
\newblock In \emph{Advances in Neural Information Processing Systems}, pp.\
  2096--2104, 2014.

\bibitem[Kingma \& Ba(2014)Kingma and Ba]{Kingma14Adam}
Diederik Kingma and Jimmy Ba.
\newblock Adam: A method for stochastic optimization.
\newblock In \emph{Proceedings of the International Conference on Learning
  Representations}, 2014.

\bibitem[Kirby(1998)]{Kirby98Learning}
Simon Kirby.
\newblock Learning, bottlenecks and the evolution of recursive syntax.
\newblock In \emph{Linguistic Evolution through Language Acquisition: Formal
  and Computational Models}. Cambridge University Press, 1998.

\bibitem[Klein \& Manning(2004)Klein and Manning]{Klein04CCM}
Dan Klein and Christopher~D Manning.
\newblock Corpus-based induction of syntactic structure: Models of dependency
  and constituency.
\newblock In \emph{Proceedings of the Annual Meeting of the Association for
  Computational Linguistics}, 2004.

\bibitem[Kottur et~al.(2017)Kottur, Moura, Lee, and Batra]{Kottur17Takedown}
Satwik Kottur, Jos{\'e}~MF Moura, Stefan Lee, and Dhruv Batra.
\newblock Natural language does not emerge'naturally'in multi-agent dialog.
\newblock In \emph{Proceedings of the Conference on Empirical Methods in
  Natural Language Processing}, 2017.

\bibitem[Lazaridou et~al.(2016)Lazaridou, Pham, and
  Baroni]{Lazaridou16Communication}
Angeliki Lazaridou, Nghia~The Pham, and Marco Baroni.
\newblock Towards multi-agent communication-based language learning.
\newblock \emph{arXiv preprint arXiv:1605.07133}, 2016.

\bibitem[Lazaridou et~al.(2017)Lazaridou, Peysakhovich, and
  Baroni]{Lazaridou16LanguageGame}
Angeliki Lazaridou, Alexander Peysakhovich, and Marco Baroni.
\newblock Multi-agent cooperation and the emergence of (natural) language.
\newblock In \emph{Proceedings of the International Conference on Learning
  Representations}, 2017.

\bibitem[Lazaridou et~al.(2018)Lazaridou, Hermann, Tuyls, and
  Clark]{Lazaridou2018PixelComm}
Angeliki Lazaridou, Karl~Moritz Hermann, Karl Tuyls, and Stephen Clark.
\newblock Emergence of linguistic communication from referential games with
  symbolic and pixel input.
\newblock In \emph{Proceedings of the International Conference on Learning
  Representations}, 2018.

\bibitem[Lewis(1976)]{Lewis76GeneralSemantics}
David Lewis.
\newblock General semantics.
\newblock In \emph{Montague grammar}. 1976.

\bibitem[Mikolov et~al.(2013)Mikolov, Sutskever, Chen, Corrado, and
  Dean]{Mikolov13Embeddings}
Tomas Mikolov, Ilya Sutskever, Kai Chen, Greg~S. Corrado, and Jeff Dean.
\newblock Distributed representations of words and phrases and their
  compositionality.
\newblock In \emph{Advances in Neural Information Processing Systems}, pp.\
  3111--3119, 2013.

\bibitem[Mitchell \& Lapata(2008)Mitchell and Lapata]{Mitchell08Composition}
Jeff Mitchell and Mirella Lapata.
\newblock Vector-based models of semantic composition.
\newblock \emph{Proceedings of the Human Language Technology Conference of the
  Association for Computational Linguistics}, pp.\  236--244, 2008.

\bibitem[Montague(1970)]{Montague70UG}
Richard Montague.
\newblock Universal grammar.
\newblock 1970.

\bibitem[Mordatch \& Abbeel(2017)Mordatch and Abbeel]{Mordatch17Emergence}
Igor Mordatch and Pieter Abbeel.
\newblock Emergence of grounded compositional language in multi-agent
  populations.
\newblock \emph{arXiv preprint arXiv:1703.04908}, 2017.

\bibitem[Nattinger \& DeCarrico(1992)Nattinger and DeCarrico]{Nattinger92MWE}
James~R Nattinger and Jeanette~S DeCarrico.
\newblock \emph{Lexical phrases and language teaching}.
\newblock 1992.

\bibitem[Parker et~al.(2011)Parker, Graff, Kong, Chen, and
  Maeda]{Parker11Gigaword}
Robert Parker, David Graff, Junbo Kong, Ke~Chen, and Kazuaki Maeda.
\newblock English gigaword fifth edition.
\newblock Technical report, Linguistic Data Consortium, 2011.

\bibitem[Reddy et~al.(2011)Reddy, McCarthy, and Manandhar]{Reddy11Comp}
Siva Reddy, Diana McCarthy, and Suresh Manandhar.
\newblock An empirical study on compositionality in compound nouns.
\newblock In \emph{Proceedings of the International Joint Conference on Natural
  Language Processing}, 2011.

\bibitem[Salehi et~al.(2015)Salehi, Cook, and Baldwin]{Salehi15Comp}
Bahar Salehi, Paul Cook, and Timothy Baldwin.
\newblock A word embedding approach to predicting the compositionality of
  multiword expressions.
\newblock In \emph{Proceedings of the Human Language Technology Conference of
  the North American Chapter of the Association for Computational Linguistics},
  2015.

\bibitem[Santoro et~al.(2016)Santoro, Bartunov, Botvinick, Wierstra, and
  Lillicrap]{Santoro16MANN}
Adam Santoro, Sergey Bartunov, Matthew Botvinick, Daan Wierstra, and Timothy
  Lillicrap.
\newblock Meta-learning with memory-augmented neural networks.
\newblock In \emph{Proceedings of the International Conference on Machine
  Learning}, 2016.

\bibitem[Saxe et~al.(2018)Saxe, Bansal, Dapello, Advani, Kolchinsky, Tracey,
  and Cox]{Saxe18Info}
AM~Saxe, Y~Bansal, J~Dapello, M~Advani, A~Kolchinsky, BD~Tracey, and DD~Cox.
\newblock On the information bottleneck theory of deep learning.
\newblock In \emph{Proceedings of the International Conference on Learning
  Representations}, 2018.

\bibitem[Schmidhuber(1987)]{Schmidhuber87Thesis}
J\"urgen Schmidhuber.
\newblock Evolutionary principles in self-referential learning.
\newblock \emph{Diplom Thesis, Institut f\"ur Informatik, Technische
  Universit\"at M\"unchen}, 1987.

\bibitem[Seo et~al.(2017)Seo, Lehrmann, Han, and Sigal]{Seo17VisDial}
Paul~Hongsuck Seo, Andreas Lehrmann, Bohyung Han, and Leonid Sigal.
\newblock Visual reference resolution using attention memory for visual dialog.
\newblock In \emph{Advances in Neural Information Processing Systems}, 2017.

\bibitem[Shwartz-Ziv \& Tishby(2017)Shwartz-Ziv and Tishby]{ShwartzZiv17Info}
Ravid Shwartz-Ziv and Naftali Tishby.
\newblock Opening the black box of deep neural networks via information.
\newblock \emph{arXiv preprint arXiv:1703.00810}, 2017.

\bibitem[Smolensky(1991)]{Smolensky98Constituency}
Paul Smolensky.
\newblock Connectionism, constituency, and the language of thought.
\newblock In \emph{Meaning in Mind: Fodor and His Critics}. Blackwell, 1991.

\bibitem[Socher et~al.(2013)Socher, Bauer, Manning, and Ng]{Socher13CVG}
Richard Socher, John Bauer, Christopher~D. Manning, and Andrew~Y. Ng.
\newblock Parsing with compositional vector grammars.
\newblock In \emph{Proceedings of the Annual Meeting of the Association for
  Computational Linguistics}, 2013.

\bibitem[Tishby \& Zaslavsky(2015)Tishby and Zaslavsky]{Tishby15Info}
Naftali Tishby and Noga Zaslavsky.
\newblock Deep learning and the information bottleneck principle.
\newblock In \emph{Information Theory Workshop}, 2015.

\bibitem[Turian et~al.(2010)Turian, Ratinov, and
  Bengio]{Turian10Representations}
Joseph Turian, Lev Ratinov, and Yoshua Bengio.
\newblock Word representations: a simple and general method for semi-supervised
  learning.
\newblock In \emph{Proceedings of the 48th Annual Meeting of the Association
  for Computational Linguistics}, pp.\  384--394. Proceedings of the Annual
  Meeting of the Association for Computational Linguistics, 2010.

\bibitem[van Benthem \& ter Meulen(1996)van Benthem and ter
  Meulen]{vanBenthem96Handbook}
JFAK van Benthem and Alice ter Meulen.
\newblock \emph{Handbook of logic and language}.
\newblock 1996.

\bibitem[Williams(1992)]{Williams92Reinforce}
Ronald~J Williams.
\newblock Simple statistical gradient-following algorithms for connectionist
  reinforcement learning.
\newblock \emph{Machine learning}, 1992.

\bibitem[Zettlemoyer \& Collins(2005)Zettlemoyer and Collins]{Zettlemoyer05CCG}
Luke~S. Zettlemoyer and Michael Collins.
\newblock Learning to map sentences to logical form: Structured classification
  with probabilistic categorial grammars.
\newblock In \emph{Proceedings of the Conference on Uncertainty in Artificial
  Intelligence}, pp.\  658--666, 2005.

\end{thebibliography}
\bibliographystyle{iclr2019_conference}

\newpage

\appendix

\section{Modeling Details}
\label{app:models}

\paragraph{Few-shot classification}

The CNN has the following form:
\begin{verbatim}
Conv(out=6, kernel=5)
ReLU
MaxPool(kernel=2)
Conv(out=16, kernel=5)
ReLU
MaxPool(kernel=2)
Linear(out=128)
ReLU
Linear(out=64)
ReLU
\end{verbatim}
The model is trained using \textsc{adam} \citep{Kingma14Adam} with a learning
rate of .001 and a batch size of 128. Training is ended when the model stops
improving on a held-out set.

\paragraph{Word embeddings}

We train FastText \citep{Bojanowski17FastText} on the first 250 million words of
the NYT section of Gigaword \citep{Parker11Gigaword}. To acquire bigram
representations, we pre-process this dataset so that each occurrence of a bigram
from the \citet{Reddy11Comp} dataset is treated as a single word for purposes of
estimating word vectors.

\paragraph{Communication}

The encoder and decoder RNNs both use gated recurrent units \citep{Cho14GRU} with
embeddings and hidden states of size 256. The size of the discrete vocabulary is
set to 16 and the message length to 4. Training uses a policy gradient
objective with a scalar baseline set to the running average reward; this is
optimized using \textsc{adam} \citep{Kingma14Adam} with a learning rate of .001
and a batch size of 256. Each model is trained for 500 steps. Models are trained
by sampling from the decoder's output distribution, but greedy decoding is used
to evaluate performance and produce \autoref{fig:comm:results}.

\section{Proposition \ref{prop:metric}}
\label{app:math}

First, some definitions:\\

\begin{defn}
  The {\bfseries size} of a derivation is given by:
  \begin{align}
    |\expr| &= 1 \hspace{5.8em} \textrm{if $\expr \in \eprims$} \nonumber \\
    |\ecomp{\expr_a}{\expr_b}| &= |\expr_a| + |\expr_b| \qquad \textrm{otherwise}
  \end{align}
\end{defn}

\begin{defn}
  The {\bfseries tree edit distance} between derivations is defined by:
  \begin{align}
    \edist(\expr_i, \expr_j) &= \indicate[i = j] \quad \textrm{if $\expr_i \in \eprims$ and
    $\expr_j \in \eprims$} \nonumber \\[1mm]
  \edist(\expr_i, \ecomp{\expr_j}{\expr_k}) &= \min \left\{ \begin{array}{c}
        \edist(\expr_i, \expr_j) + |\expr_k| \\[1mm]
        \edist(\expr_i, \expr_k) + |\expr_j| \\[1mm]
    \end{array} \right\} \quad \textrm{if $\expr_i \in \eprims$} \nonumber \\[1mm]
    \edist(\ecomp{\expr_i}{\expr_j}, \ecomp{\expr_k}{\expr_\ell}) &=
    \min \left\{ \begin{array}{ll}
        \edist(\expr_i, \expr_k) + \edist(\expr_j, \expr_\ell) \\[1mm]
        \edist(\ecomp{\expr_i}{\expr_j}, \expr_k) + |\expr_\ell| &
        \edist(\ecomp{\expr_i}{\expr_j}, \expr_\ell) + |\expr_k| \\[1mm]
        \edist(\ecomp{\expr_k}{\expr_\ell}, \expr_i) + |\expr_j| &
        \edist(\ecomp{\expr_k}{\expr_\ell}, \expr_j) + |\expr_i| \\[1mm]
    \end{array} \right\}
    \label{eq:edist}
  \end{align}
\end{defn}

Now, suppose we have $\obs$ and $\obs'$ with derivations $\expr =
\oracle(\obs)$, $\expr' = \oracle(\obs')$ and representations $\repr =
\fn(\obs)$, $\repr' = \fn(\obs')$.
Proposition \ref{prop:metric} claims that $\rdist(\repr, \repr') \leq
\edist(\expr, \expr') + 2\eps$. %(|\expr| + |\expr'| - 1)$.

\begin{lemma}
  $\rdist(\einterp(\expr), 0) \leq |\expr|$. 
\end{lemma}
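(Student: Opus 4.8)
The plan is to induct on the structure of the derivation $\expr$. The base case is when $\expr = \expr_i \in \eprims$: here $\einterp(\expr_i) = \eparam_i^*$ by definition, and $|\expr_i| = 1$, so the claim is exactly hypothesis (2) of Proposition~\ref{prop:metric}, namely $\rdist(\einterp(\expr_i), 0) \leq 1 = |\expr_i|$.

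For the inductive step, suppose $\expr = \ecomp{\expr_a}{\expr_b}$ and that the bound holds for the strictly smaller derivations $\expr_a$ and $\expr_b$. Then $\einterp(\expr) = \rcomp{\einterp(\expr_a)}{\einterp(\expr_b)}$. The key trick is to write $0 = \rcomp{0}{0}$ (valid since $0$ is the identity for $\rcomp{}{}$, as assumed in hypothesis (2)) and then apply the subadditivity property, hypothesis (3), with the substitution $\repr_i = \einterp(\expr_a)$, $\repr_k = 0$, $\repr_j = \einterp(\expr_b)$, $\repr_\ell = 0$. This gives
\begin{equation*}
  \rdist\big(\rcomp{\einterp(\expr_a)}{\einterp(\expr_b)}, \rcomp{0}{0}\big)
    \leq \rdist(\einterp(\expr_a), 0) + \rdist(\einterp(\expr_b), 0).
\end{equation*}
By the inductive hypothesis the right-hand side is at most $|\expr_a| + |\expr_b|$, which equals $|\expr|$ by Definition~1. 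Combining with $0 = \rcomp{0}{0}$ closes the induction.

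The only subtlety — and the one place that needs a word of care rather than pure routine — is that hypothesis (3) as stated is a statement about composed pairs on both sides, so it does not directly bound $\rdist(\rcomp{a}{b}, c)$ for an arbitrary $c$; we must first rewrite the target point $0$ as a composition $\rcomp{0}{0}$ to bring it into the required form. This is exactly what the identity-element clause of hypothesis (2) is there to license. Once that observation is made, the argument is a clean two-case structural induction with no estimates beyond a single application of subadditivity per internal node, and no further obstacles.
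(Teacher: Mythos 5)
Your proof is correct and follows essentially the same route as the paper: base case from Condition~2, and for composed derivations an application of Condition~3 with $\repr_k = \repr_\ell = 0$ (i.e.\ rewriting $0$ as $\rcomp{0}{0}$) together with induction. Your writeup merely spells out the identity-element rewriting that the paper's one-line proof leaves implicit.
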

\begin{proof}
  For $\expr \in \eprims$ this follows immediately from Condition 2 in the
  proposition. For composed derivations it follows from Condition 3 taking
  $\repr_k = \repr_\ell = 0$ and induction on $|\expr|$.
\end{proof}

\begin{lemma}
  $\rdist(\einterp(\expr), \einterp(\expr')) \leq \edist(\expr, \expr')$
\end{lemma}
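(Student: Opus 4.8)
The plan is to prove the bound by induction on the total size $|\expr| + |\expr'|$ of the two derivations, case-splitting exactly along the three clauses that define $\edist$ in Definition~2, so that each recursive call appearing in the definition of the edit distance is mirrored by an application of one of the three conditions of the Proposition together with (for ``deletion'' steps) Lemma~1. The base case is when both $\expr = \expr_i$ and $\expr' = \expr_j$ are primitives: if $\expr_i = \expr_j$ then $\einterp(\expr_i) = \einterp(\expr_j)$ trivially, so $\rdist(\einterp(\expr_i), \einterp(\expr_j)) = 0 \leq \edist(\expr_i, \expr_j)$; and if $\expr_i \neq \expr_j$, then $\edist(\expr_i, \expr_j) = 1$ (the cost of relabeling one primitive) and Condition~1 gives $\rdist(\einterp(\expr_i), \einterp(\expr_j)) \leq 1$.

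For the inductive step, first suppose $\expr_i \in \eprims$ and $\expr' = \ecomp{\expr_j}{\expr_k}$, so $\einterp(\expr') = \rcomp{\einterp(\expr_j)}{\einterp(\expr_k)}$. Using that $0$ is the identity for $\rcomp{}{}$, write $\einterp(\expr_i) = \rcomp{\einterp(\expr_i)}{0}$; then Condition~3 gives $\rdist(\einterp(\expr_i), \einterp(\expr')) \leq \rdist(\einterp(\expr_i), \einterp(\expr_j)) + \rdist(0, \einterp(\expr_k))$, where the first term is at most $\edist(\expr_i, \expr_j)$ by the inductive hypothesis (the pair $(\expr_i, \expr_j)$ has strictly smaller total size, since $|\expr_k| \geq 1$) and the second is at most $|\expr_k|$ by Lemma~1 (in its mirror-image form $\rdist(0, \einterp(\expr)) \leq |\expr|$, proved identically using symmetry of $\rdist$). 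Writing instead $\einterp(\expr_i) = \rcomp{0}{\einterp(\expr_i)}$ yields the companion bound $\edist(\expr_i, \expr_k) + |\expr_j|$. Since $\rdist(\einterp(\expr_i), \einterp(\expr'))$ is at most both of these, it is at most their minimum, which is exactly $\edist(\expr_i, \ecomp{\expr_j}{\expr_k})$. The symmetric situation ($\expr$ composite, $\expr'$ primitive) is identical after swapping arguments, using symmetry of both $\rdist$ and $\edist$.

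The remaining case is $\expr = \ecomp{\expr_i}{\expr_j}$ and $\expr' = \ecomp{\expr_k}{\expr_\ell}$, where I must show that $\rdist(\einterp(\expr), \einterp(\expr'))$ is at most each of the five expressions inside the $\min$ in \autoref{eq:edist}. For the ``match'' term $\edist(\expr_i, \expr_k) + \edist(\expr_j, \expr_\ell)$, apply Condition~3 directly to $\rcomp{\einterp(\expr_i)}{\einterp(\expr_j)}$ and $\rcomp{\einterp(\expr_k)}{\einterp(\expr_\ell)}$ and bound the two resulting terms by the inductive hypothesis. For a ``deletion'' term such as $\edist(\ecomp{\expr_i}{\expr_j}, \expr_k) + |\expr_\ell|$, re-associate one side through the identity --- here write $\einterp(\expr) = \rcomp{\einterp(\ecomp{\expr_i}{\expr_j})}{0}$ --- apply Condition~3, and bound the pieces by the inductive hypothesis and by Lemma~1 respectively; the other three deletion terms are handled the same way, padding on whichever side lines up the $0$ correctly. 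In every recursive call the total derivation size strictly decreases (a deleted subtree always has size $\geq 1$), so the induction is well-founded. Taking the minimum over the five bounds gives the claim. The one place that needs a little care --- really the crux of the argument --- is the bookkeeping in the deletion cases: recognizing that ``delete the subtree $\expr_k$'' on the derivation side corresponds, on the representation side, to replacing $\einterp(\expr_k)$ by the composition identity $0$ at cost $\rdist(\einterp(\expr_k), 0) \leq |\expr_k|$ from Lemma~1, after inserting the identity so that the subadditivity Condition~3 applies. Once this correspondence is in place, matching every branch of \autoref{eq:edist} is routine. (With Lemma~2 in hand, Proposition~\ref{prop:metric} follows by the triangle inequality: $\rdist(\fn(\obs),\fn(\obs')) \leq \rdist(\fn(\obs),\einterp(\expr)) + \rdist(\einterp(\expr),\einterp(\expr')) + \rdist(\einterp(\expr'),\fn(\obs')) \leq \eps + \edist(\expr,\expr') + \eps$.)
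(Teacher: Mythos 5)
Your proof is correct and follows essentially the same route as the paper's: structural induction with Condition~3 handling the matched-subtree term, and deleted subtrees charged via Lemma~1 after padding with the composition identity $0$ (the paper's shorthand ``define $\edist(\expr,0)=|\expr|$ and set $\expr_\ell = 0$'' is exactly your identity-insertion step). Your version merely spells out the primitive-versus-composite clause and all four deletion branches explicitly, which the paper compresses into its two cases.
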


\begin{proof}
  By induction on the structure of $\expr$ and $\expr'$:

  \paragraph{Base case}
  Both $\expr, \expr' \in \eprims$.

  If $\expr = \expr'$, $\rdist(\einterp(\expr), \einterp(\expr')) =
  \rdist(\einterp(\expr), \einterp(\expr)) = 0 = \edist(\expr, \expr')$.

  If $\expr \neq \expr'$, $\rdist(\einterp(\expr), \einterp(\expr')) \leq 1 =
  \edist(\expr, \expr')$ from Condition 1.

  \paragraph{Inductive case}

  Consider the arrangement of derivations that minimizes \autoref{eq:edist} for
  derivation $\expr$ and $\expr'$. 
  There are two possibilities:

  \emph{Case 1:}
  $\edist(\expr, \expr')$ has the form $\edist(\expr_i, \expr_k) +
  \edist(\expr_j, \expr_\ell)$ for some $\expr_{i,j,k,\ell}$. W.l.o.g.\ let
  $\expr = \ecomp{\expr_i}{\expr_j}$ and $\expr' = \ecomp{\expr_k}{\expr_\ell}$.
  Then,
  \begin{align*}
    \rdist(\einterp(\expr), \einterp(\expr'))
    &= \rdist(\rcomp{\einterp(\expr_i)}{\einterp(\expr_j)}, \rcomp{\einterp(\expr_k)}{\einterp{\expr_\ell}}) \\
    &\leq \rdist(\einterp(\expr_i), \einterp(\expr_k)) + \rdist(\einterp(\expr_j), \einterp{\expr_\ell}) \\
    &\leq \edist(\expr_i, \expr_k) + \edist(\expr_j, \expr_\ell) \\
    &= \edist(\expr, \expr')
  \end{align*}

  \emph{Case 2:}
  $\edist(\expr, \expr')$ has the form $\edist(\expr_i, \expr_k) + |\expr_j|$
  for some $\expr_{i,j,k}$. W.l.o.g.\ let $\expr = \ecomp{\expr_i}{\expr_j}$ and
  $\expr' = \expr_k$. Abusing notation slightly, let us define $\edist(\expr, 0)
  = |\expr|$. If we let $\expr_\ell = 0$ this case reduces to the previous one.
\end{proof}

Finally,

\begin{proof}[Proof of Proposition \ref{prop:metric}]
  \begin{align*}
    \rdist(\repr, \repr') &\leq \rdist(\einterp(\expr), \einterp(\expr')) + 2\eps \\
    &\leq \edist(\expr, \expr') + 2\eps
  \end{align*}
\end{proof}
  
\end{document}